%% file: main.tex
\documentclass[10pt,twocolumn,letterpaper]{article}

\usepackage{cvpr}              

\usepackage{tikz}
\usepackage{comment}
\usepackage{cite}
\usepackage{amsmath,amssymb} 
\usepackage{color}
\usepackage{kotex}
\usepackage{varwidth}
\usepackage{siunitx} 
\usepackage{makecell}
\usepackage{verbatim}
\usepackage{multirow}
\usepackage{booktabs}
\usepackage{adjustbox}
\usepackage{array}
\usepackage{bbm}
\usepackage{mathtools}
\usepackage{grffile}
\usepackage{enumitem}
\usepackage{kotex}
\usepackage{accents}
\usepackage[accsupp]{axessibility}
\usepackage[dvipsnames]{xcolor}

\definecolor{cvprblue}{rgb}{0.21,0.49,0.74}
\usepackage{amsthm}
\usepackage[pagebackref,breaklinks,colorlinks,allcolors=cvprblue]{hyperref}
\newcommand{\post}[1]{\textcolor{ForestGreen}{\textbf{$+ #1$}}}
\newcommand{\negt}[1]{\textcolor{BrickRed}{\textbf{$- #1$}}}

\newtheorem{proposition}{Proposition}

\def\paperID{44237} 
\def\confName{CVPR}
\def\confYear{2026}

\title{OPRO: Orthogonal Panel-Relative Operators \\ for Panel-Aware In-Context Image Generation
}
\author{
Sanghyeon Lee, \quad Minwoo Lee, \quad Euijin Shin, \quad Kangyeol Kim, \quad Seunghwan Choi, \quad Jaegul Choo\\
Korea Advanced Institute of Science and Technology (KAIST)\\
Daejeon, Korea\\
{\tt\small  \{shlee6825, minwoo011015, ejshin0310, kangyeolk, shadow2496, jchoo\}@kaist.ac.kr}
}

\begin{document}
\maketitle
\input{sec/0_abstract}    
\input{sec/1_intro}

\input{sec/2_references}
\input{sec/3_methods}
\input{sec/4_experiments}
\input{sec/5_conclusions}

{
    \small
    \bibliographystyle{ieeenat_fullname}
    \bibliography{main}
}

\input{sec/X_suppl_arxiv}

\end{document}

%% file: sec/0_abstract.tex
\begin{abstract}
We introduce a parameter-efficient adaptation method for panel-aware in-context image generation with pre-trained diffusion transformers. The key idea is to compose learnable, panel-specific orthogonal operators onto the backbone's frozen positional encodings. This design provides two desirable properties: (1) isometry, which preserves the geometry of internal features, and (2) same-panel invariance, which maintains the model’s pre-trained intra-panel synthesis behavior. Through controlled experiments, we demonstrate that the effectiveness of our adaptation method is not tied to a specific positional encoding design but generalizes across diverse positional encoding regimes. By enabling effective panel-relative conditioning, the proposed method consistently improves in-context image-based instructional editing pipelines, including state-of-the-art approaches.
\end{abstract}

%% file: sec/1_intro.tex
\section{Introduction}
\label{sec:intro}

Large-scale pre-trained models, from diffusion models~\citep{ho2020denoising, rombach2022high} to more recent diffusion transformers (DiTs)~\citep{peebles2023scalable, chen2023pixart, esser2024scaling}, have achieved state-of-the-art results in high-fidelity image generation. A promising application for these models is in-context image generation (ICG). In this paradigm, the model adapts its outputs to visual examples provided in context, much as large language models (LLMs) use textual prompts for in-context learning~\citep{brown2020language, wei2022emergent}. Enabling such flexible, example-based control has long been a research topic in image generation, often referred to as exemplar- or reference-based generation~\citep{Gatys_2016_CVPR, park2019SPADE}.

\begin{figure*}[t]
  \centering
  \includegraphics[width=\linewidth, trim=0in 4.5in 0in 4.5in, clip]{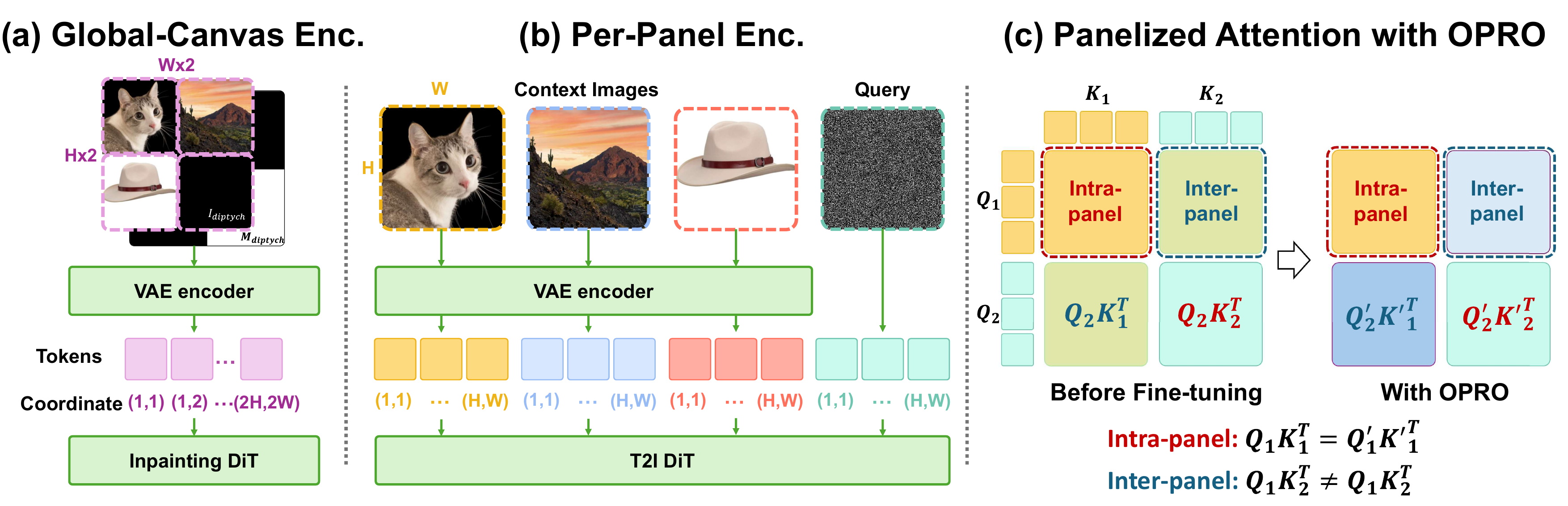}  
\caption{\textbf{Two positional regimes in tiled ICG and the role of OPRO in panelized attention.}
(a) \textbf{Global-canvas encoding}: Inpainting-based DiTs treat the tiled layout as a single image on a single global coordinate grid, so different panels become disjoint regions of a unified canvas.
(b) \textbf{Per-panel encoding}: T2I-based methods encode each panel in its own local frame and then fuse context features into target generation through attention, reusing the same coordinate range across panels. 
(c) In panelized attention, diagonal blocks are intra-panel and off-diagonal blocks are inter-panel. OPRO preserves the intra-panel blocks while modulating the inter-panel blocks.}
  \label{fig:main_intro_icg}
\end{figure*}

A typical setup for ICG is the tiled-panel layout~\citep{Diptych, ICL-Lora}, which arranges one or more context panels and a target query panel as separate visual regions. As illustrated in \cref{fig:main_intro_icg}, existing ICG approaches are commonly based on two main paradigms: leveraging inpainting-based DiTs~\citep{fluxfill} and text-to-image (T2I) DiTs~\citep{flux}.

ICG methods based on inpainting DiTs~\citep{Diptych, song2025insert, mao2025ace++, zhang2025context} formulate the generation process as a spatial completion problem. These methods place the context panels on a unified canvas, mask the target panel, and generate it conditioned on the visible panels and the text instruction. Because inpainting-based ICG methods process the entire tiled layout as a single image, positional encodings~
\citep{su2024roformer} are assigned to a single global coordinate grid shared across all panels as illustrated in~\cref{fig:main_intro_icg}(a).

Alternatively, ICG frameworks based on T2I DiTs~\citep{ICL-Lora, huang2024group, uno} incorporate context images through feature injection or inversion techniques~\citep{ddiminversion}. Rather than arranging all panels on a unified canvas, T2I-based methods first encode or invert the context images separately and then fuse the resulting latent representations into the target-generation process through attention~\citep{vaswani2017attention}. To condition the target query, these methods concatenate the extracted context representations within the attention pathway of the target-generation stream. Each panel uses its own coordinate system before inter-panel fusion as illustrated in~\cref{fig:main_intro_icg}(b). This allows both the reference and target panels to share the same scale for positioning.

Despite the distinct structural choices of global-canvas and per-panel encodings, both remain panel-agnostic at the attention level. In inpainting-based ICG, the tiled layout is processed on a single global coordinate grid; tokens from different panels are treated as distant regions within a single canvas rather than members of distinct panels. In T2I-based ICG, context features are encoded in separate streams and fused into target generation through attention; because each stream is position-encoded in its own local frame, tokens from different panels can share identical positional indices. In neither case does the attention mechanism receive an explicit signal indicating whether a token pair is intra-panel or inter-panel. Consequently, the adapter must simultaneously learn inter-panel relation transfer and preserve the backbone's pre-trained intra-panel synthesis behavior, creating a dual burden for adaptation.

This observation motivates an adaptation mechanism that explicitly distinguishes inter-panel from intra-panel interactions in attention. To this end, we introduce the Orthogonal Panel-Relative Operator (OPRO), a parameter-efficient fine-tuning (PEFT) method that applies learnable, panel-specific orthogonal operators to the backbone's position-aware queries and keys. OPRO is designed to preserve pre-trained intra-panel behavior while introducing a learnable panel-relative modulation for inter-panel interactions as illustrated in~\cref{fig:main_intro_icg}(c). This design is guided by two key properties. First, \emph{isometry} (\cref{prop:isometry}) preserves the norms of the transformed queries and keys, preventing unintended rescaling of attention logits and thereby maintaining the backbone's feature geometry during fine-tuning. Second, \emph{invariance on} the same panel (\cref{prop:same-panel-invariance}) guarantees that the attention scores between tokens of the same panel remain identical to those of the pre-trained backbone. Together, these properties allow the adapter to focus its capacity on inter-panel transfer without perturbing pre-trained intra-panel synthesis.

Our main contributions are summarized as follows:
\begin{itemize}
\item We propose OPRO, a parameter-efficient panel-relative adaptation method for tiled in-context generation, which applies learnable, panel-specific orthogonal operators to the backbone's position-aware queries and keys.

\item We establish that OPRO provides two exact guarantees for structured adaptation: \emph{same-panel invariance}, which preserves pre-trained intra-panel attention, and \emph{isometry}, which preserves feature norms and avoids unintended rescaling of attention logits.

\item We introduce a two-stage compositional reasoning benchmark to perform a controlled analysis of OPRO and evaluate its consistent performance improvements across diverse positional regimes, including APE, RoPE~\citep{su2024roformer}, LieRE~\citep{ostmeierliere}, and ComRoPE~\citep{yu2025comrope}.

\item We demonstrate that OPRO is effective in real-world ICG by improving instructional image editing, including gains over state-of-the-art methods~\citep{zhang2025context, uno} on MagicBrush~\citep{zhang2023magicbrush}.
\end{itemize}

%% file: sec/2_references.tex
 \section{Related Work}
\label{sec:related}

\subsection{In-Context Image Generation}
Motivated by the in-context learning phenomenon in large language models~\citep{brown2020language, wei2022emergent}, early ICG methods sought to replicate similar behavior in diffusion models. Prompt Diffusion~\citep{wang2023context}, iPromptDiff~\citep{chen2023improving}, and Context Diffusion~\citep{najdenkoska2024context} are trained on curated visual or textual queries/instructions paired with targets. These methods integrate the encoded features into the diffusion backbone and optimize the model under a standard diffusion objective~\citep{ho2020denoising}. Although effective within their specific training settings, these approaches are better characterized as training-time context conditioning. Their ability to generalize to unseen tasks remains to be thoroughly established.

Recent works, such as IC-LoRA~\citep{ICL-Lora} and Diptych~\citep{Diptych}, highlight an emerging tiled-canvas in-context regime for large-scale diffusion transformers and inpainting models~\citep{peebles2023scalable, chen2023pixart, fluxfill}. In these setups, reference images and the textual query are arranged on a single canvas, and the model empirically transfers concepts, styles, and identities without full model retraining. IC-LoRA exploits this behavior via PEFT on tiled canvases, providing an alternative to explicit conditioning methods such as IP-Adapter~\citep{ye2023ip}. Diptych builds on high-capacity inpainting models (e.g., FluxFill~\citep{fluxfill}) with a left-to-right canvas layout, where background removal reduces leakage and attention amplification improves the fidelity of the conditioned region. 

In parallel, PEFT has emerged as a practical approach to adapt robust text-to-image backbones while retaining their zero-shot behavior. ACE++~\citep{mao2025ace++} proposes an instruction-based diffusion framework that extends long-context inpainting-style inputs to diverse generation and editing tasks, offering both full and lightweight fine-tuning variants. InsertAnything~\citep{song2025insert} introduces a unified DiT-based reference insertion framework whose in-context editing mechanism treats reference images as contextual conditioning via multimodal attention. ICEdit~\citep{zhang2025context} attains strong instruction-guided editing performance with PEFT and Mixture-of-Experts~\citep{jacobs1991adaptive}.  
UNO~\citep{uno} proposes a progressive synthesis pipeline that harnesses the intrinsic in-context capabilities of diffusion transformers. 

\subsection{Orthogonal Relative Positional Encodings}
Rotary position embedding (RoPE)~\citep{su2024roformer} encodes a relative offset as a block-diagonal rotation, making the attention kernel equivariant to translations along the modeled axes. RoPE has been widely adopted in language models~\citep{touvron2023llama, chowdhery2023palm, jiang2024mixtral} for long-context stability and extensions.

This core idea was naturally adapted to 2D token layouts for vision. While early applications used simple axis-wise frequencies, later work proposed mixed axial frequencies to capture diagonals and perspective effects~\citep{heo2024rotary}. 2D relative positional encodings are now a foundational component in modern large-scale Diffusion Transformers, such as PixArt-$\alpha$~\citep{chen2023pixart} and the MMDiT architecture used in Flux~\citep{esser2024scaling}. Beyond these 2D adaptations, the concept of orthogonal rotation has been generalized further: Lie group-based approaches (LieRE)~\citep{ostmeierliere} model rotations directly in higher-dimensional subspaces, and commuting-angle formulations (ComRoPE)~\citep{yu2025comrope} learn trainable rotation spectra with guaranteed commutativity. 
While these foundational encodings effectively model continuous pixel displacements through rotational transformations, they remain inherently agnostic to discrete canvas partitions. We propose to augment this mathematical structure by injecting an explicit panel-aware phase into the position-aware representations. 

%% file: sec/3_methods.tex
\section{Method}
\label{sec:method}
We address in-context image generation by modifying position-aware attention to allow panel identity to influence cross-panel interactions while preserving the backbone’s pre-trained same-panel behavior. Our key idea is to apply a learnable, panel-specific orthogonal modulation to the backbone’s frozen, position-aware queries and keys.
Our approach is detailed as follows. We first formalize tiled-panel ICG with position-aware attention in~\cref{sec:method_setup}. We then introduce the Orthogonal Panel-Relative Operator (OPRO) and its core properties in~\cref{sec:method_opro}. Finally,~\cref{sec:method_param} describes our parameterization and zero-interference initialization strategy.

\subsection{Tiled-Panel ICG with Position-Aware Attention}
\label{sec:method_setup}

We consider a tiled layout partitioned into $P$ panels. After patchification and flattening, the model processes a token sequence of length $L$. Each token $i \in \{1,\dots,L\}$ is associated with two attributes: a panel index $p(i) \in \{1,\dots,P\}$ and a spatial coordinate $x_i$ in its native positional frame.

Let $\tilde{ q}_i, \tilde{ k}_i \in \mathbb{R}^{d_h}$ denote the backbone's frozen, \emph{position-aware} query and key representations for token $i$ in a given attention head of dimension $d_h$. These representations are obtained after the backbone applies its native positional mechanism. Depending on the backbone family, this positional mechanism may arise from a single global coordinate system over a tiled canvas or from per-panel positional encoding followed by cross-panel fusion. We keep this definition general and do not assume a specific positional encoding form.

The standard attention score between tokens $i$ and $j$ is
\begin{equation}
s_{ij} = \frac{\langle \tilde{ q}_i, \tilde{ k}_j \rangle}{\sqrt{d_h}}.
\label{eq:base_attn}
\end{equation}

In tiled-panel ICG, standard PEFT modules update shared attention projections but do not explicitly encode whether a token pair is same-panel or cross-panel in the attention logit. Consequently, panel-agnostic adaptation methods (e.g., LoRA) face a dual burden: they must acquire inter-panel retrieval capabilities while simultaneously preserving the pre-trained intra-panel geometry of the backbone.

\begin{figure*}[h!]
  \centering
  \includegraphics[width=1\linewidth, trim=0in 5.1in 0in 5.1in, clip]{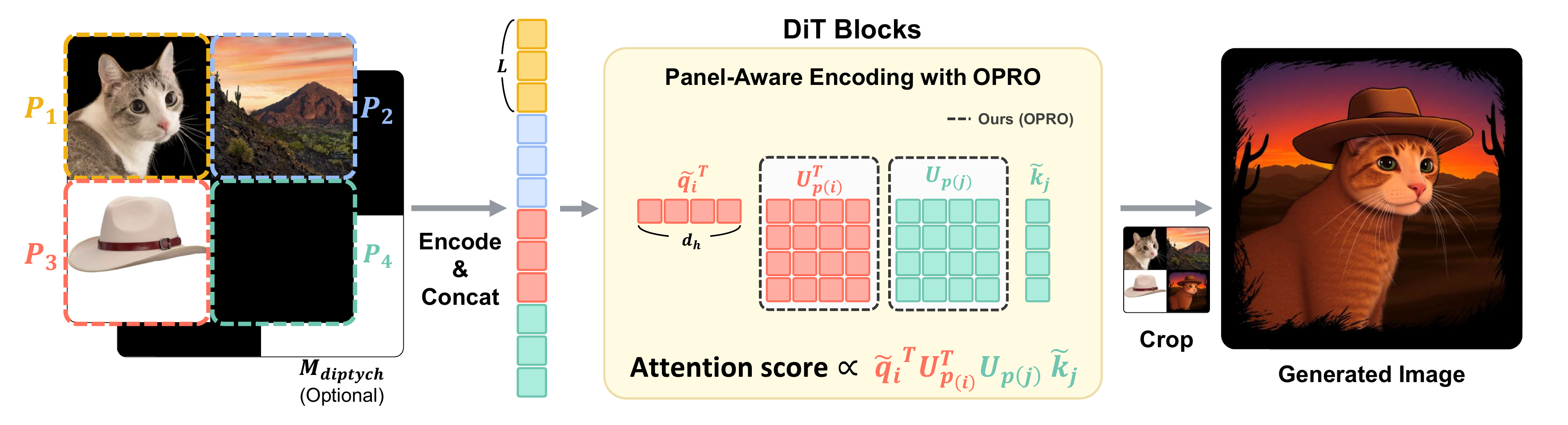}  
\caption{\textbf{Overview of OPRO for tiled-panel in-context image generation.}
The proposed framework partitions a tiled canvas into $P$ panels and processes them as a single token sequence. Within each attention layer of a backbone, OPRO modulates the position-aware queries ($\tilde{q}_i$) and keys ($\tilde{k}_j$) via panel-specific orthogonal operators ($U_{p(i)}$ and $U_{p(j)}$). This adaptation explicitly guides cross-panel interactions while preserving the original same-panel attention geometry. An example generated image is provided on the right.}
\label{fig:main-ICG}
\end{figure*}

\subsection{Orthogonal Panel-Relative Operator (OPRO)}
\label{sec:method_opro}
While standard PEFT modules or simple linear adapters could be used to inject panel identity, they inherently alter the backbone's feature space. This disruption forces the model to relearn intra-panel attention geometries, degrading the pre-trained generation quality. To explicitly encode panel identity while strictly preserving the native same-panel behavior, we require a transformation that preserves the inner product.

Therefore, we introduce a set of learnable panel operators restricted to the special orthogonal group:
\begin{equation}
\{ U_p\}_{p=1}^P, \qquad U_p \in \mathrm{SO}(d_h).
\end{equation}
By definition, these operators satisfy the orthogonality condition $U_p^\top U_p = I$.

Given the frozen, position-aware query and key representations of the backbone, OPRO applies the panel operator associated with each token:
\begin{equation}
\hat{ q}_i =  U_{p(i)} \tilde{ q}_i,
\qquad
\hat{ k}_j =  U_{p(j)} \tilde{ k}_j.
\label{eq:opro_qk}
\end{equation}
The resulting attention score becomes
\begin{equation}
s'_{ij}
=
\frac{\langle \hat{ q}_i, \hat{ k}_j \rangle}{\sqrt{d_h}}
=
\frac{\tilde{ q}_i^\top \left( U_{p(i)}^\top  U_{p(j)}\right)\tilde{ k}_j}{\sqrt{d_h}}.
\label{eq:opro_attn}
\end{equation}
Equation~\eqref{eq:opro_attn} demonstrates that OPRO preserves the position-aware query and key features of the backbone while introducing an explicit \emph{panel-relative} modulation through the relative operator $ U_{p(i)}^\top  U_{p(j)}$. When two tokens belong to the same panel, this relative operator collapses to the identity. Conversely, when they belong to different panels, OPRO learns a panel-specific modulation of the original attention score.

\begin{proposition}[Isometry]
\label{prop:isometry}
For all tokens $i,j$, the OPRO transformation preserves the norms of the position-aware query and key vectors:
\[
\|\hat{ q}_i\| = \|\tilde{ q}_i\|,
\qquad
\|\hat{ k}_j\| = \|\tilde{ k}_j\|.
\]
\end{proposition}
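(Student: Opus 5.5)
The plan is to compute the squared norm of the transformed query directly from the definition of OPRO in \eqref{eq:opro_qk} and reduce it to the squared norm of the frozen query using only the orthogonality condition $U_p^\top U_p = I$. I will then repeat the same argument verbatim for the key. No property of the backbone's positional mechanism is needed. In particular, no property of RoPE, LieRE, or APE is used, since $\tilde q_i,\tilde k_j$ are treated as arbitrary fixed vectors in $\mathbb{R}^{d_h}$.

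Concretely, fix a token $i$ and write $U = U_{p(i)} \in \mathrm{SO}(d_h)$. Then
\begin{equation*}
\|\hat q_i\|^2 = \langle U\tilde q_i, U\tilde q_i\rangle = \tilde q_i^\top U^\top U \tilde q_i = \tilde q_i^\top \tilde q_i = \|\tilde q_i\|^2 .
\end{equation*}
Taking nonnegative square roots gives $\|\hat q_i\| = \|\tilde q_i\|$. The identical computation with $U_{p(j)}$ and $\tilde k_j$ gives $\|\hat k_j\| = \|\tilde k_j\|$.

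Since $i$ and $j$ are arbitrary, and the panel assignment $p(\cdot)$ enters only through which orthogonal matrix is applied, the statement holds for all tokens. This covers same-panel and cross-panel pairs alike. The determinant-one condition of $\mathrm{SO}(d_h)$ is not used; membership in $\mathrm{O}(d_h)$ suffices.

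There is no genuine obstacle here. The only point worth care is making sure that the operators actually realized by the parameterization in \cref{sec:method_param} lie exactly in $\mathrm{SO}(d_h)$, so that $U_p^\top U_p = I$ holds. This is true, for example, if they are built as matrix exponentials of skew-symmetric generators, or as products of Givens rotations. In that case the identity is exact rather than approximate, and the proposition follows from the one-line computation above. I would state the result for any $U_p$ satisfying the orthogonality condition and note that the parameterization guarantees it.
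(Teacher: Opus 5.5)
Your proof is correct and is essentially the paper's own argument: expand $\|\hat q_i\|^2 = \tilde q_i^\top U_{p(i)}^\top U_{p(i)}\tilde q_i$, use $U_p^\top U_p = I$ to reduce it to $\|\tilde q_i\|^2$, and repeat the same computation for the keys. Your side remarks are accurate but go beyond what the paper's proof needs: only membership in $\mathrm{O}(d_h)$ is used, and the exponential-of-skew-symmetric parameterization makes the orthogonality exact.
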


\begin{proof}
By definition, $\hat{ q}_i =  U_{p(i)} \tilde{ q}_i$ and $\hat{ k}_j =  U_{p(j)} \tilde{ k}_j$. Since each $ U_p$ is orthogonal, $ U_p^\top  U_p =  I$. Therefore,
\[
\|\hat{ q}_i\|^2
=
\tilde{ q}_i^\top  U_{p(i)}^\top  U_{p(i)} \tilde{ q}_i
=
\tilde{ q}_i^\top \tilde{ q}_i
=
\|\tilde{ q}_i\|^2,
\]
and likewise $\|\hat{ k}_j\| = \|\tilde{ k}_j\|$.
\end{proof}

\begin{proposition}[Same-Panel Invariance]
\label{prop:same-panel-invariance}
If two tokens $i$ and $j$ belong to the same panel, i.e., $p(i)=p(j)$, then OPRO preserves their original attention score:
\[
\langle \hat{ q}_i, \hat{ k}_j \rangle
=
\langle \tilde{ q}_i, \tilde{ k}_j \rangle.
\]
Equivalently, for the same-panel token pairs, $s'_{ij} = s_{ij}$.
\end{proposition}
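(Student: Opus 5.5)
The plan is a direct computation from the definition in \eqref{eq:opro_qk}, combined with the orthogonality condition $U_p^\top U_p = I$ that holds for every $U_p \in \mathrm{SO}(d_h)$. No earlier result is needed beyond this defining property. \cref{prop:isometry} is in fact the diagonal case $i=j$ of the same identity, so the argument will mirror its proof.

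First, write the transformed inner product as $\langle \hat{q}_i, \hat{k}_j\rangle = \tilde{q}_i^\top U_{p(i)}^\top U_{p(j)} \tilde{k}_j$. This is exactly the numerator of \eqref{eq:opro_attn}, and it follows from $\langle Ax, By\rangle = x^\top A^\top B y$. Second, use the hypothesis $p(i)=p(j)=:p$. The relative operator then becomes $U_p^\top U_p$, and orthogonality collapses it to the identity. This leaves $\tilde{q}_i^\top \tilde{k}_j = \langle \tilde{q}_i, \tilde{k}_j\rangle$.

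Finally, divide both sides by $\sqrt{d_h}$ and compare with \eqref{eq:base_attn} and \eqref{eq:opro_attn}. This gives the equivalent formulation $s'_{ij}=s_{ij}$.

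There is no genuine obstacle in this argument. The only point to state explicitly is that both tokens are acted on by the \emph{same} matrix $U_p$; this is precisely where the panel-specific (rather than token-specific) parameterization matters. Note also that only orthogonality is used, not the determinant condition defining $\mathrm{SO}(d_h)$, so the result holds verbatim for $U_p \in \mathrm{O}(d_h)$.
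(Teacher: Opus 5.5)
Your proposal is correct and is the paper's proof: set $p(i)=p(j)=p$ in Eq.~\eqref{eq:opro_attn}, collapse $U_p^\top U_p$ to $I$, and divide by $\sqrt{d_h}$; your observation that only orthogonality is used, so the result also holds for $U_p \in \mathrm{O}(d_h)$, is accurate. One aside is slightly misstated: Proposition~\ref{prop:isometry} is the special case $x=y$ of the identity $\langle U_p x, U_p y\rangle = \langle x, y\rangle$, not the case $i=j$ of this proposition, since $\tilde{q}_i \neq \tilde{k}_i$ in general.
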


\begin{proof}
If $p(i)=p(j)=p$, then from Eq.~\eqref{eq:opro_attn},
\[
\langle \hat{ q}_i, \hat{ k}_j \rangle
=
\tilde{ q}_i^\top  U_p^\top  U_p \tilde{ k}_j
=
\tilde{ q}_i^\top \tilde{ k}_j
=
\langle \tilde{ q}_i, \tilde{ k}_j \rangle.
\]
Thus, the attention score is preserved for all same-panel token pairs.
\end{proof}

These two propositions together yield the structural behavior we seek. \cref{prop:isometry} ensures that OPRO solely rotates the position-aware query and key features of the backbone without changing their norms, thereby avoiding unintended rescaling of the attention logits. \cref{prop:same-panel-invariance} ensures that intra-panel attention remains identical to that of the pre-trained backbone. As a result, OPRO can concentrate its capacity on modulating cross-panel interactions, where panel identity matters, without perturbing the pre-trained same-panel synthesis behavior of the backbone.

\paragraph{Relation to orthogonal relative positional encodings}
Our formulation above is defined for general position-aware queries and keys, and therefore does not require a specific positional encoding family. However, when the underlying architecture employs an orthogonal relative positional encoding (e.g., RoPE~\citep{su2024roformer}, LieRE~\citep{ostmeierliere}, or ComRoPE~\citep{yu2025comrope}), OPRO admits an additional compositional interpretation around the frozen positional operator. We defer this orthogonal-relative derivation, along with the RoPE-style block-diagonal specialization, to the supplementary material.

\subsection{Parameterization and Zero-Interference Initialization}
\label{sec:method_param}

Section~\ref{sec:method_opro} defines OPRO through abstract orthogonal operators $ U_p \in \mathrm{SO}(d_h)$. We now describe an efficient parameterization for learning these operators and an initialization strategy that preserves the pre-trained backbone at the start of fine-tuning.

\paragraph{Low-rank Lie exponential parameterization}
Directly optimizing a transformation matrix on the constrained manifold of orthogonal matrices ($\mathrm{SO}(d_h)$) demands computationally expensive operations, such as Riemannian gradient descent or repeated matrix projections. To handle this efficiently, we optimize an unconstrained skew-symmetric generator in the corresponding Lie algebra $\mathfrak{so}(d_h)$ and recover the orthogonal operator via the matrix exponential map~\citep{lezcano2019cheap}.

Specifically, we define two learnable matrices $L_p, R_p \in \mathbb{R}^{d_h \times \rho}$ for each panel $p$, where $\rho < d_h$ is the rank, and formulate the generator $A_p$ as
\begin{equation}
A_p = L_p R_p^\top - R_p L_p^\top.
\label{eq:generator}
\end{equation}
The orthogonal operator $U_p$ is then obtained through the matrix exponential
\begin{equation}
U_p = \exp(A_p).
\label{eq:matrix_exp}
\end{equation}
Because $A_p^\top = -A_p$, the matrix exponential naturally guarantees orthogonality ($U_p^\top U_p = I$) by construction. This parameterization allows standard optimizers to operate in an unconstrained Euclidean space while yielding a dense cross-channel orthogonal transform in a parameter-efficient manner.

\paragraph{Zero-interference initialization}
As demonstrated by ControlNet~\citep{zhang2023adding}, zero initialization is a highly effective strategy when applying additive or multiplicative modules to pre-trained models. Because OPRO is designed to adapt a frozen backbone, the adapter must not interfere with the pre-trained representations at the optimization step zero. We achieve this by initializing the low-rank matrices as follows:
\begin{equation}
L_p = 0, \quad R_p \sim \mathcal{N}(0, \sigma^2).
\label{eq:zero_init}
\end{equation}
Under this asymmetric initialization, the generator evaluates to $A_p = 0$, yielding $U_p = \exp(0) = I$. Consequently, OPRO begins as an exact identity mapping, leaving the original attention of the backbone completely unchanged at the start of training. Furthermore, this initialization still admits non-zero gradients for $L_p$, ensuring that optimization commences immediately. Detailed derivations of the gradients are provided in the supplementary material.

%% file: sec/4_experiments.tex
\section{Experiments}
\label{sec:experiments}
We conduct experiments to validate OPRO's effectiveness. 
First, in \cref{subsection:exp-2stage}, we introduce a \textbf{two-stage compositional reasoning} task. This controlled proxy task is designed to analyze OPRO's core properties: its ability to preserve pre-trained knowledge (Same-Panel Invariance) while robustly learning new inter-panel rules.  
Second, in \cref{subsection:exp-main-editing}, we evaluate OPRO on the real-world task of \textbf{instructional image editing} within a two-panel layout. We demonstrate that OPRO, when integrated as a lightweight module, consistently enhances the performance of state-of-the-art diffusion-based editing methods~\citep{zhang2025context, uno}. Finally, to demonstrate OPRO's scalability beyond this two-panel setup, we extend our evaluation to a three-panel layout for \textbf{subject-driven generation} in the supplementary material.

\begin{table*}[t]
\centering
\scriptsize
\caption{Accuracy of LoRA ($r=8$) with/without OPRO ($\rho=8$) across panels. $\Delta$ is the absolute gain over LoRA.}
\label{tab:toy_main_quant} 
\setlength{\tabcolsep}{4.0pt}
\renewcommand{\arraystretch}{1.15}
\begin{minipage}{.32\linewidth}
\centering
\textbf{Panel $2 \times 2$}\\[2pt] 
\begin{tabular}{lcccc}
\toprule
Type & MLP & LoRA & +OPRO & $\Delta$ \\
\midrule
APE~\citep{vaswani2017attention} & 34.90 & \underline{38.00} & \textbf{40.50} & $+2.50$ \\
RoPE~\citep{su2024roformer}       & 37.60 & \underline{46.40} & \textbf{49.90} & $+3.50$ \\
LieRE~\citep{ostmeierliere}       & 33.40 & \textbf{58.10} & \textbf{58.10} & $+0.00$ \\
ComRoPE~\citep{yu2025comrope}   & 38.60 & \underline{58.50} & \textbf{66.60} & $+8.10$ \\
\bottomrule
\end{tabular}
\end{minipage}
\hfill
\begin{minipage}{.32\linewidth}
\centering
\textbf{Panel $3 \times 3$}\\[2pt]
\begin{tabular}{lcccc}
\toprule
Type & MLP & LoRA & +OPRO & $\Delta$ \\
\midrule
APE~\citep{vaswani2017attention}       & 23.70 & \underline{24.40} & \textbf{26.30} & $+1.90$ \\
RoPE~\citep{su2024roformer}       & 27.40 & \underline{36.20} & \textbf{42.00} & $+5.80$ \\
LieRE~\citep{ostmeierliere}       & 29.00 & \underline{34.20} & \textbf{42.70} & $+8.50$ \\
ComRoPE~\citep{yu2025comrope}   & 24.80 & \underline{37.80} & \textbf{45.70} & $+7.90$ \\
\bottomrule
\end{tabular}
\end{minipage}
\hfill
\begin{minipage}{.32\linewidth}
\centering
\textbf{Panel $4 \times 4$}\\[2pt]
\begin{tabular}{lcccc}
\toprule
Type & MLP & LoRA & +OPRO & $\Delta$ \\
\midrule
APE~\citep{vaswani2017attention}       & \underline{20.00} & 19.50 & \textbf{24.20} & $+4.70$ \\
RoPE~\citep{su2024roformer}       & 19.60 & \underline{30.30} & \textbf{39.20} & $+8.90$ \\
LieRE~\citep{ostmeierliere}       & 20.10 & \underline{22.90} & \textbf{26.70} & $+3.80$ \\
ComRoPE~\citep{yu2025comrope}   & 18.90 & \underline{29.20} & \textbf{47.20} & $+18.00$ \\
\bottomrule
\end{tabular}
\end{minipage}
\end{table*}

\subsection{Two-Stage Compositional Reasoning Task}
\label{subsection:exp-2stage}
\begin{figure*}[h!]
  \centering
  \includegraphics[width=0.95\linewidth, trim=0in 1.1in 0in 1.1in, clip]{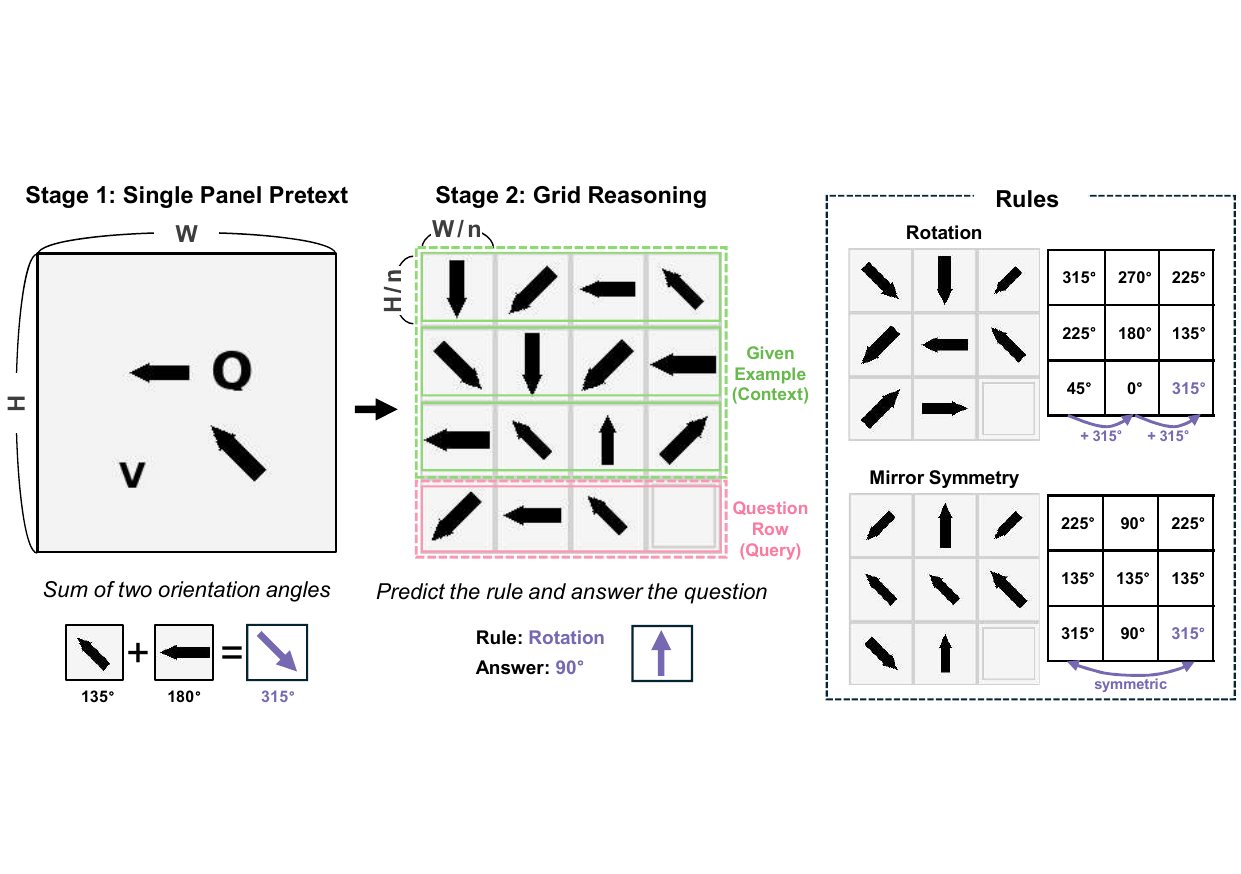}  
  \caption{\textbf{Two-stage compositional reasoning.} Stage 1 (single-panel pretext): classify the sum of two arrow orientations modulo \(360^\circ\) (8-way) with distractors. Stage 2 (grid reasoning): on an \(n{\times}n\) grid, each row provides context examples and a held-out query; the row-wise rule is either rotation by \(k\!\cdot\!45^\circ\) or vertical mirror symmetry.}
  \label{fig:exp-2stage}
\end{figure*}

ICG requires a model to infer a latent, episode-specific rule from visual examples and apply it to a query. This process demands two distinct abilities: (1) robust intra-panel perception to understand the content within each panel, (2) inter-panel reasoning to infer the relationship between panels.

We design a synthetic two-stage benchmark (\cref{fig:exp-2stage}) motivated by Raven-style puzzles~\citep{raven} to concisely evaluate these abilities using the proposed method. In Stage 1 (single-panel pretext), we equip the model with core perceptual skills (e.g., arrow detection and angle composition). In Stage 2 (grid reasoning), we then test a hypothesis: can the model preserve its pre-trained Stage 1 skills when deployed in a multi-panel context while simultaneously learning the new inter-panel reasoning task in its adapters?
Throughout the evaluation, we can measure how effectively different adapters (OPRO vs. LoRA) handle the panel-based structure by replacing pixel generation with classification.

\paragraph{Stage 1: Single-Panel Pretext Task}
Stage 1: Pretrain the model from scratch on a single-panel geometric task. As shown on the left of \cref{fig:exp-2stage}, each image contains a single panel with two arrows placed at random, non-overlapping positions. Arrow orientations are sampled from $\{0^\circ, 45^\circ, \dots, 315^\circ\}$. The label is the sum of the two orientations modulo $360^\circ$, quantized into 8 classes. To prevent shortcutting, we introduce distractors by rendering arrows at random scales and scattering letters across the panel. This pretext induces key intra-panel skills, consisting of robust arrow detection and geometric composition, specifically angle addition.

\paragraph{Stage 2: Grid Reasoning} Stage 2 tests the model's ability to learn inter-panel visual reasoning rules while preserving its pre-trained Stage 1 skills. We partition the image into an $n{\times}n$ grid ($n{\in}\{2,3,4\}$). Crucially, as described in \cref{fig:exp-2stage}, \textbf{each row functions as an independent reasoning problem.}
For a given row $i$, the first $n-1$ panels serve as visual context (examples), and the final $n$-th panel is the held-out query (question). 
The model's objective is to: (1) infer a hidden visual transformation rule from the context panels, (2) apply the found rule to the visual content (the two arrows) in the query.

We consider two categories of visual transformation rules: (i) \textbf{Rotation}: 8 distinct rules defined by visually rotating the panel content (both arrows) by $k\cdot45^\circ$ relative to the previous panel, where $k{\in}\mathbb{N}$. (ii) \textbf{Mirror Symmetry}: 2 distinct rules correspond to a visual reflection of the panel content about the vertical canvas axis (e.g., in a 3-panel row, the third panel is a reflection of the first). To avoid ambiguity that can arise with reflections under 8-way quantization in the limited $2{\times}2$ context, we use only the Rotation rules for $n=2$ panels. For $n \in \{3, 4\}$, we uniformly sample from the complete set of both Rotation and Mirror Symmetry rules.

\paragraph{Experimental Setup}
We design a two-stage experiment to evaluate OPRO's effectiveness in fine-tuning for a multi-panel task. All experiments are based on ViT-B~\citep{vit}.

\textbf{Stage 1 (Backbone pretraining).}
We train four backbones from scratch on a single-panel pretext task, differing only in positional encodings:
APE~\citep{vaswani2017attention}, RoPE~\citep{su2024roformer}, LieRE~\citep{ostmeierliere}, and ComRoPE~\citep{yu2025comrope}.

\textbf{Stage 2 (Multi-panel fine-tuning).}
We initialize from Stage 1 and freeze the backbone, then fine-tune on an 8-way grid-reasoning classification task under three settings:

\begin{itemize} 
\item \textbf{Linear Probe:} Only the classification head (an MLP) is trained. 
\item \textbf{LoRA:} Only LoRA is trained (rank r=8). 
\item \textbf{LoRA + OPRO (Ours):} Both LoRA (r=8) and our OPRO (rank $\rho=8$) 
\end{itemize}
Both stages use \(50\text{k}\) training and \(1\text{k}\) validation images at \(224\times224\).
We use a batch size of 256, Adam~\citep{kingma2014adam}, and cross-entropy loss.
We report top-1 validation accuracy (chance level 12.5\%).

\paragraph{Robustness to Positional Encodings}
\cref{tab:toy_main_quant} summarizes accuracy across positional-encoding backbones and grid sizes. LoRA+OPRO improves over LoRA in the majority of settings, with gains that generally increase with task difficulty (larger $n{\times}n$); improvements reach up to $+18.0\%$ for ComRoPE at $4{\times}4$. Gains are observed with both absolute (APE) and relative (orthogonal) (RoPE, LieRE, ComRoPE) encodings, indicating that the operator is not tied to a specific positional scheme.

\begin{figure}[t]
  \centering
  \includegraphics[width=1\linewidth]{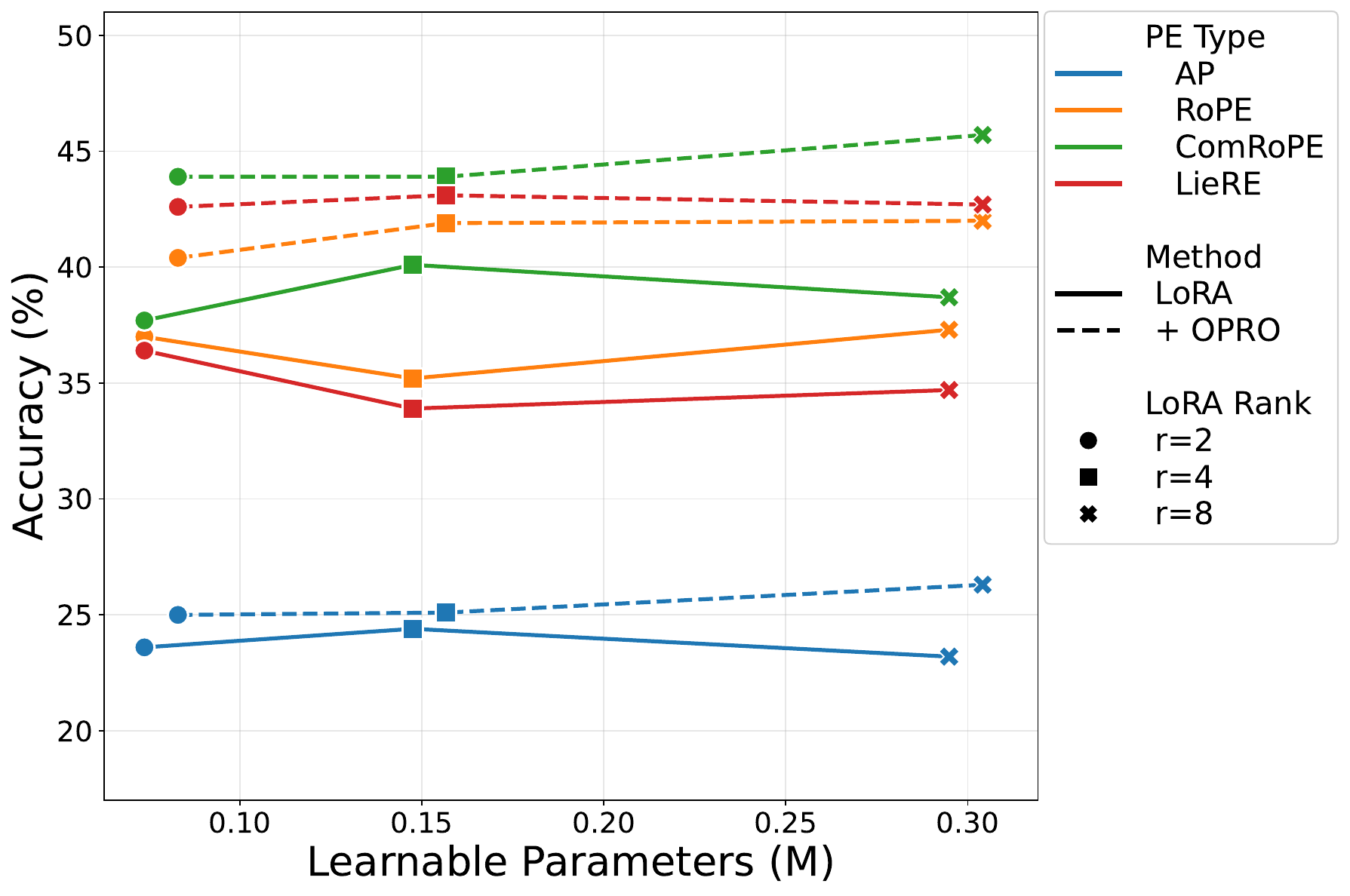}
  \caption{\textbf{OPRO's impact on parameter efficiency.} Validation accuracy (\%) plotted against the number of trainable adapter parameters (M) for 3×3 panels}
  \label{fig:exp-toy-param}
\end{figure}

\paragraph{Parameter Efficiency}
We measure the accuracy–parameter trade-off on $3\times3$ grids (\cref{fig:exp-toy-param}) using a frozen ViT-B backbone (86.6M parameters). LoRA adds trainable parameters that scale linearly with its rank $r$, amounting to 1.327M for $r=8$. Our OPRO, which scales with its own rank $\rho$, introduces a minimal overhead. For instance, at $\rho=8$, OPRO adds only 0.111M parameters. This overhead is negligible, representing just 8.4\% of the LoRA parameters (at $r=8$) and \textbf{0.13\% of the backbone size}. As shown in \cref{fig:exp-toy-param}, this high parameter efficiency allows LoRA+OPRO to achieve a significantly better accuracy–parameter trade-off than LoRA alone.

\paragraph{Ablation Studies}
To validate that the performance gains of OPRO stem from its specific structural properties, we conducted ablation studies on two key design principles: \textbf{(i) Isometry} (norm-preserving transformation) and \textbf{(ii) Same-Panel Invariance} (SP-Inv). We created two variants to isolate the impact of these properties. All variants, including OPRO, were tested at $\rho=4$ for a fair comparison.

\textbf{1) Additive Panel Bias (APB)}. Replace OPRO with per‑panel additive biases for queries/keys. This design is non-isometric (vector addition alters norms) and violates SP-Inv (the biases distort the pre-trained Stage 1 scores):
\[
\langle \tilde{Q}_i + b^Q_p, \tilde{K}_j + b^K_p \rangle 
= \langle \tilde{Q}_i, \tilde{K}_j \rangle 
+ \langle \tilde{Q}_i, b^K_p \rangle 
\]
\[
+ \langle b^Q_p, \tilde{K}_j \rangle 
+ \langle b^Q_p, b^K_p \rangle.
\]

\textbf{2) Asymmetric Orthogonal Operator}: In this variant, we learn independent orthogonal operators for queries and keys, \( U_p, V_p \in \mathrm{SO}(d_h) \):
\[
\hat{Q}_i = U_{p(i)} \tilde{Q}_i, \quad \hat{K}_j = V_{p(j)} \tilde{K}_j.
\]
This design maintains isometry, but breaks SP-Inv unless \( U_p = V_p \), as the inner product becomes:
\[
\langle \hat{Q}_i, \hat{K}_j \rangle = \langle \tilde{Q}_i, (U_p^\top V_p) \tilde{K}_j \rangle.
\]
For SP-Inv across all same-panel pairs, we require \( U_p^\top V_p = I \), which implies \( U_p = V_p \). However, independent learning generally violates this condition.

\begin{table}[t]
\centering
\caption{\textbf{Ablation on 2-stage compositional reasoning task (ViT-B, $3{\times}3$).} We analyze the impact of component removal. \textbf{APB} lacks isometry, and \textbf{Asym-OPRO} violates SP-Inv. \textbf{w/o Zero Init} denotes OPRO with random initialization. OPRO (Ours) satisfies all properties.}\label{tab:toy_ablation}
\resizebox{\columnwidth}{!}{%
\begin{tabular}{l c c c}
\toprule
Method & Isometry & SP-Inv & Accuracy \\
\midrule
LoRA (Baseline) & - & - & 36.20 \\
\midrule
+ APB & No & No & 35.70 \\
+ Asym-OPRO & Yes & No & 39.70 \\
+ OPRO (w/o Zero Init) & Yes & Yes & 38.60 \\
\midrule
\textbf{+ OPRO (Ours)} & \textbf{Yes} & \textbf{Yes} & \textbf{42.00} \\
\bottomrule
\end{tabular}%
}
\end{table}

\begin{figure*}[t]
  \centering
  \includegraphics[width=\textwidth]{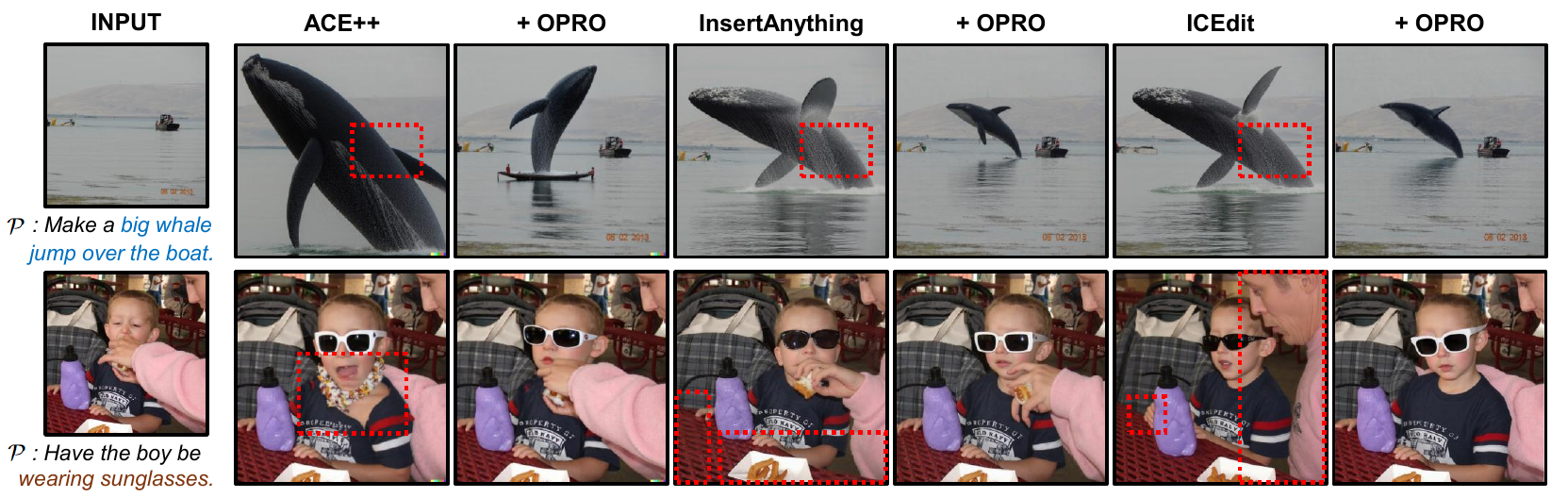}
    \caption{
    \textbf{Comparison with inpainting-based ICG baselines on MagicBrush~\citep{zhang2023magicbrush} test set.}
    Following ICEdit~\citep{zhang2025context}, a diptych prompt is used: \textit{“A diptych with two side-by-side images \ldots but \{ $\mathcal{P}$ \}.”}
    Red dotted boxes highlight regions where the baselines fail to preserve context from the input image, resulting in incorrect or incomplete edits.}  \label{fig:exp-main-opro-qualitative}
\end{figure*}

\begin{figure*}[t]
  \centering
  \includegraphics[width=\linewidth]{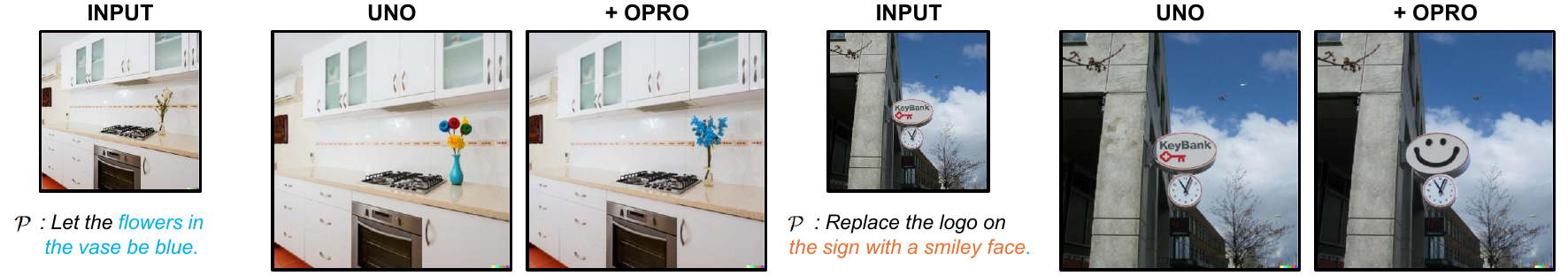}
    \caption{
    \textbf{Comparison with a T2I baseline on MagicBrush~\citep{zhang2023magicbrush} test set.}
    In UNO~\citep{uno}, we use the following instruction: \textit{“Create a single image matching the reference, but with the following edit: \{ $\mathcal{P}$ \}.”}}
    \label{fig:main_qual_uno}
\end{figure*}

\subsection{Instructional Image Editing with OPRO}
\label{subsection:exp-main-editing}
While \cref{subsection:exp-2stage} provides a controlled benchmark, the primary objective is to evaluate OPRO on real-world in-context image generation. Specifically, instructional image editing is formulated as a task comprising two panels: a source reference panel and a target query panel. To assess editing capabilities on this layout, the MagicBrush~\citep{zhang2023magicbrush} test set is utilized. Following prior evaluation protocols~\citep{zhang2023magicbrush,zhang2025context}, visual consistency and editing fidelity are measured using L1, CLIP-I~\citep{clipscore}, and DINO~\citep{oquab2023dinov2,caron2021emerging} metrics.

\paragraph{Experimental Setup}
To demonstrate that OPRO is agnostic to backbone architectures and positional-encoding schemes, we integrate it as a lightweight module (+0.93M parameters, $\rho=32$) into state-of-the-art baselines representing both ICG paradigms. For inpainting-based methods (global-canvas encoding), we evaluate ICEdit~\citep{zhang2025context}, ACE++~\citep{mao2025ace++}, and InsertAnything~\citep{song2025insert}, instantiated on FluxFill~\citep{fluxfill}. For T2I-based methods (per-panel encoding), we evaluate UNO~\citep{uno}, instantiated on FLUX.1. We train all models for 5,000 steps using the Adam optimizer~\citep{kingma2014adam} with a learning rate of $1\times10^{-4}$ and a batch size of 8. The spatial layout and text conditioning are adapted to each paradigm. For inpainting baselines, source images are resized to $512 \times 512$ and placed on the left half of a $512 \times 1024$ canvas, trained with right-half masking, and prompted following ICEdit: \textit{“A diptych with two side-by-side images of the same scene. On the right, the scene is identical to the left but {instruction}.”} For the T2I-based UNO, we follow its native per-panel setup and utilize a direct instruction prompt: \textit{“Create a single image identical to the reference but with the following edit: {instruction}.”}

\begin{table}[t]
\centering
\caption{Quantitative results on MagicBrush~\citep{zhang2023magicbrush}
test set. OPRO ($\rho=32$) adds only +0.93M learnable parameters and consistently improves.}
\label{tab:exp-main-real}
\setlength{\tabcolsep}{3pt}
\resizebox{\columnwidth}{!}{
\begin{tabular}{l c c c c}
\toprule
Methods & Train. Pa & L1 ↓ & CLIP-I ↑ & DINO ↑ \\
\midrule
ACE++ [arXiv'25] & 76.6M & 0.1215 & 0.8658 & 0.7394 \\
\textbf{+ OPRO ($\rho=32$)} & +0.93M & \textbf{0.1114} & \textbf{0.8749} & \textbf{0.7767} \\
\midrule
InsertAnything [arXiv'25] & 37.5M & 0.1327 & 0.8722 & 0.7917 \\
\textbf{+ OPRO ($\rho=32$)} & +0.93M & \textbf{0.1269} & \textbf{0.8735} & \textbf{0.8009} \\
\midrule
ICEdit [NeurIPS'25] & 22.4M & 0.1189 & 0.8703 & 0.7706 \\
\textbf{+ OPRO ($\rho=32$)} & +0.93M & \textbf{0.0781} & \textbf{0.9002} & \textbf{0.8531} \\
\midrule
UNO [ICCV'25] & 478.2M & 0.0575 & 0.9236 & 0.8961 \\
\textbf{+ OPRO ($\rho=32$)} & +0.93M & \textbf{0.0387} & \textbf{0.9281} & \textbf{0.8980} \\

\bottomrule
\end{tabular}%
}
\end{table}

\paragraph{Quantitative Results} As \cref{tab:exp-main-real} demonstrates, adding OPRO consistently improves performance across all baselines while introducing only +0.93M parameters. This overhead corresponds to 4.16\% for ICEdit (22.4M) and 0.20\% for UNO (478.2M), indicating that the additional cost remains negligible across markedly different model scales.

The largest improvement is observed for ICEdit. OPRO reduces L1 by 34.31\% (0.1189 $\rightarrow$ 0.0781), while increasing CLIP-I from 0.8703 to 0.9002 and DINO from 0.7706 to 0.8531. Qualitative results are in \cref{fig:exp-main-opro-qualitative} and \cref{fig:main_qual_uno}.

%% file: sec/5_conclusions.tex
\section{Limitations}
 OPRO introduces a small number of additional orthogonal transforms in the attention layers, which increase training time and inference latency. We view this as a reasonable computational trade-off for improved in-context behavior. Additionally, the model is trained on a fixed panel layout. While we can handle multi-reference configurations during inference by reusing learned operators for panels with shared functional roles (detailed in the supplementary material), handling entirely new layouts that require distinct operators remains an area for future work.

\section{Conclusion}
We propose OPRO, a panel-relative orthogonal adapter for multi-panel in-context image generation. By applying learnable, panel-specific orthogonal operators to the backbone's frozen, position-aware queries and keys, OPRO preserves the feature geometry while cleanly decoupling cross-panel retrieval from intra-panel synthesis. Furthermore, OPRO consistently outperforms standard LoRA in both real-world instructional image editing and our proposed compositional reasoning benchmark.

\section*{Acknowledgments}
This work was supported by Institute for Information \& communications Technology Planning \& Evaluation(IITP) grant funded by the Korea government(MSIT) (RS-2019-II190075, Artificial Intelligence Graduate School Program(KAIST)), the National Research Foundation of Korea(NRF) grant funded by the Korea government(MSIT) (No. RS-2025-00555621), and i-Scream Media. This research was also supported by the High-Performance Computing Support Project, funded by the Ministry of Science and ICT (MSIT) and the National IT Industry Promotion Agency (NIPA) under grant No. RQT-25-070278 (providing 40 H100 GPUs).

%% file: sec/X_suppl_arxiv.tex
\appendix

\twocolumn[{%
\renewcommand\twocolumn[1][]{#1}%
\maketitlesupplementary
\begin{center}
    \centering
    \includegraphics[width=\linewidth]{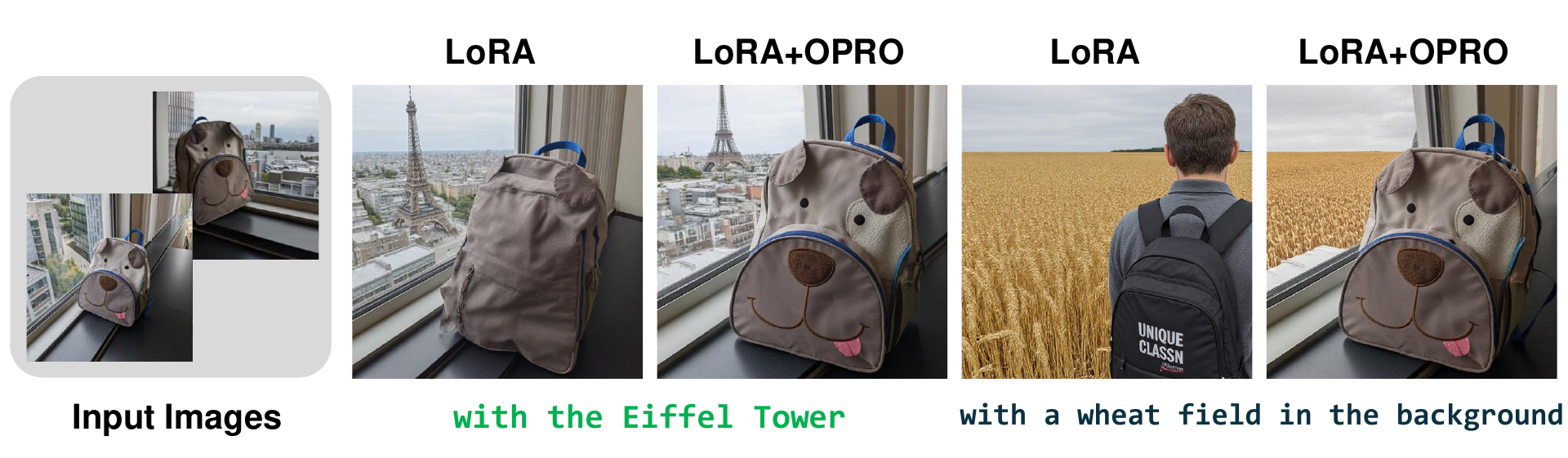}
    \captionsetup{hypcap=false}
    \captionof{figure}{
    \textbf{Qualitative comparison on subject-driven image generation.}
    Results are shown on the DreamBooth~\citep{ruiz2023dreambooth} test set under a three-panel protocol. For each subject, two reference images sampled from a four-shot support set occupy the first two panels, and the third panel is synthesized from a fully masked target canvas. We compare LoRA-only fine-tuning of ICEdit~\citep{zhang2025context} with the same model modulated by OPRO.
  }
\label{fig:sup-qual-subject}
\end{center}%
}]

This supplementary material provides additional empirical results, analyses, and implementation details that complement the main manuscript. Section~\ref{sec:sup-subject} evaluates OPRO on subject-driven image generation in a three-panel setting. Section~\ref{sec:sup-derivation} presents a RoPE-aligned block-diagonal parameterization and its formal derivation, and Section~\ref{sec:sup-init} provides a detailed analysis of the zero-initialization strategy. Section~\ref{sec:sup-cost} analyzes the computational overhead of OPRO. Section~\ref{sec:sup-qualitative} presents additional qualitative results and examines inference-time scalability. Section~\ref{sec:sup-real-ablation} reports ablation studies on instructional image editing, and Section~\ref{sec:sup-hyper} summarizes the complete hyperparameter settings used in the experiments.

\section{Subject-Driven Image Generation with OPRO}
\label{sec:sup-subject}
To assess the scalability of OPRO beyond the two-panel setting in the main manuscript, we evaluate subject-driven image generation in a \textbf{three-panel layout} leveraging the DreamBooth~\citep{ruiz2023dreambooth} test dataset. For each subject, we construct a four-shot support set and randomly sample two reference images to populate the first two panels. The third panel serves as a fully masked target canvas. We adopt ICEdit~\citep{zhang2025context} as the base model and integrate OPRO as a lightweight panel-relative adaptation module. This task places a stronger emphasis on cross-panel subject consistency because the target panel must be synthesized from scratch while aggregating subject cues from multiple reference panels. Optimization proceeds for 2,000 steps with Adam~\citep{kingma2014adam} at a learning rate of $1\times10^{-4}$.   

Table~\ref{tab:sup-subject-quant} shows that OPRO improves ICEdit on both DINO and CLIP-I, with absolute gains of 0.0364 and 0.0348, respectively. Figure~\ref{fig:sup-qual-subject} further illustrates more faithful preservation of subject appearance and more coherent synthesis of the target panel than the LoRA-only baseline.

\begin{table}[t]
\centering
\caption{
    \textbf{Quantitative comparison on subject-driven image generation.}
    Results are reported on a subset of DreamBooth using a three-panel layout with two reference panels and one fully masked target panel. OPRO consistently improves ICEdit on both DINO and CLIP-I. Higher is better in all cases.
    }
    \label{tab:sup-subject-quant}
\begin{tabular}{lcc}
\toprule
\textbf{Method} & \textbf{DINO} ($\uparrow$) & \textbf{CLIP-I} ($\uparrow$) \\
\midrule
ICEdit~\citep{zhang2025context} (LoRA-only) & 0.5828 & 0.7376 \\
ICEdit~\citep{zhang2025context} + OPRO & \textbf{0.6192} & \textbf{0.7724} \\
\bottomrule
\end{tabular}
\end{table}

\section{RoPE-Aligned Block-Diagonal Parameterization}
\label{sec:sup-derivation}
This section complements~\cref{sec:method_opro} of the main manuscript by detailing the relationship between OPRO and orthogonal relative positional encodings~\citep{su2024roformer, ostmeierliere, yu2025comrope}. As briefly discussed in the main text, OPRO admits an additional compositional interpretation around the frozen positional operator. We first derive this general orthogonal-relative form and then present a RoPE-aligned block-diagonal specialization, which yields the panel-relative phase-shift interpretation. This specialization is introduced for analysis and intuition; the trainable parameterization used in the main experiments is the low-rank Lie exponential parameterization in~\cref{sec:method_param} of the main manuscript.

\paragraph{General orthogonal-relative form.}
For completeness, the frozen position-aware vectors are expressed in matrix form.
Let \(q_i, k_j \in \mathbb{R}^{d_h}\) denote the content vectors before the frozen positional transform, and let the backbone positional mechanism be represented by an orthogonal operator \(R(\mathbf{x}) \in \mathrm{SO}(d_h)\):
\[
\tilde q_i = R(\mathbf{x}_i) q_i,
\qquad
\tilde k_j = R(\mathbf{x}_j) k_j .
\]
Assume that \(R(\mathbf{x})\) satisfies the relative-position property
\[
R(\mathbf{x}_i)^\top R(\mathbf{x}_j) = R(\mathbf{x}_j - \mathbf{x}_i).
\]
Applying OPRO gives
\[
\hat q_i = U_{p(i)} \tilde q_i,
\qquad
\hat k_j = U_{p(j)} \tilde k_j,
\]
and therefore
\[
\langle \hat q_i, \hat k_j \rangle
=
q_i^\top
R(\mathbf{x}_i)^\top
U_{p(i)}^\top U_{p(j)}
R(\mathbf{x}_j)
k_j.
\]
This expression shows that OPRO preserves the frozen positional operator while inserting a learnable panel-relative orthogonal factor \(U_{p(i)}^\top U_{p(j)}\).

\paragraph{RoPE-aligned block-diagonal specialization.}
To obtain a closed-form phase interpretation, we consider a stronger specialization in which \(U_p\) is restricted to the same block-diagonal \(\mathrm{SO}(2)\) basis as RoPE. Let \(d_h\) be even and write
\[
R(\mathbf{x})
=
\mathrm{diag}\!\Big(
R^{(1)}(\theta_1(\mathbf{x})),\,
\dots,\,
R^{(d_h/2)}(\theta_{d_h/2}(\mathbf{x}))
\Big),
\]
where each \(R^{(k)}(\theta)\in\mathrm{SO}(2)\) is a \(2\times2\) rotation.
We parameterize
\[
U_p
=
\mathrm{diag}\!\Big(
R^{(1)}(\phi_{p,1}),\,
\dots,\,
R^{(d_h/2)}(\phi_{p,d_h/2})
\Big).
\]
Because \(R(\mathbf{x})\) and \(U_p\) are block-diagonal rotations acting on the same two-dimensional channel pairs, they commute:
\[
U_p R(\mathbf{x}) = R(\mathbf{x}) U_p.
\]
Hence,
\[
\begin{aligned}
\langle \hat q_i, \hat k_j \rangle
&=
q_i^\top
R(\mathbf{x}_i)^\top
U_{p(i)}^\top U_{p(j)}
R(\mathbf{x}_j)
k_j \\
&=
q_i^\top
R(\mathbf{x}_i)^\top
R(\mathbf{x}_j)
U_{p(i)}^\top U_{p(j)}
k_j \\
&=
q_i^\top
R(\mathbf{x}_j-\mathbf{x}_i)
U_{p(i)}^\top U_{p(j)}
k_j .
\end{aligned}
\]
Moreover,
\[
\begin{split}
U_{p(i)}^\top U_{p(j)} = \mathrm{diag}\!\Big( & R^{(1)}(\phi_{p(j),1}-\phi_{p(i),1}),\, \dots, \\
& R^{(d_h/2)}(\phi_{p(j),d_h/2}-\phi_{p(i),d_h/2}) \Big),
\end{split}
\]
so the effective angle of the \(k\)-th block is
\[
\theta_k(\mathbf{x}_j-\mathbf{x}_i)
+
\phi_{p(j),k}-\phi_{p(i),k}.
\]
Therefore, in this RoPE-aligned block-diagonal specialization, OPRO injects a learnable panel-relative phase offset into each frequency block.

\begin{table*}[t]
\centering
\scriptsize
\small
\caption{Effect of the block-diagonal implementation of OPRO on top of LoRA ($r=8$). We report the accuracy (\%) of LoRA+OPRO-BD and the absolute change $\Delta$ (percentage points) compared to the LoRA baseline from Tab.~1 of the main manuscript.}
\label{tab:supp_plain_opro}
\setlength{\tabcolsep}{4.5pt}
\renewcommand{\arraystretch}{1.15}

\begin{tabular}{lcccccc}
\toprule
& \multicolumn{2}{c}{Panel $2 \times 2$}
& \multicolumn{2}{c}{Panel $3 \times 3$}
& \multicolumn{2}{c}{Panel $4 \times 4$} \\
\cmidrule(lr){2-3}\cmidrule(lr){4-5}\cmidrule(lr){6-7}
\textbf{Type} 
& \textbf{+OPRO-BD} & $\boldsymbol{\Delta}$ 
& \textbf{+OPRO-BD} & $\boldsymbol{\Delta}$ 
& \textbf{+OPRO-BD} & $\boldsymbol{\Delta}$ \\
\midrule
APE      
& 37.10 & \negt{0.90}
& 23.60 & \negt{0.80}
& 19.00 & \negt{0.50} \\
RoPE\citep{su2024roformer}     
& 45.80 & \negt{0.60}
& 38.70 & \post{2.50}
& 32.50 & \post{2.20} \\
LieRE\citep{ostmeierliere}
& 58.70 & \post{0.60}
& 36.20 & \post{2.00}
& 23.30 & \post{0.40} \\
ComRoPE\citep{yu2025comrope}
& 57.90 & \negt{0.60}
& 40.90 & \post{3.10}
& 29.80 & \post{0.60} \\
\bottomrule
\end{tabular}
\end{table*}

\paragraph{Validation on Compositional Reasoning Task}
Table~\ref{tab:supp_plain_opro} summarizes the performance of the block-diagonal parameterization implementation (OPRO-BD) applied to the two-stage compositional reasoning task. With only a minimal parameter overhead equal to the number of panels ($P=4, 9, 16$), OPRO-BD demonstrates improvements for orthogonal positional encodings in the $3\times3$ and $4\times4$ panel experiments.

\section{Detailed Analysis of Zero Initialization Strategy}
\label{sec:sup-init}
In this section, we provide a detailed analysis of the zero-initialization strategy.
We first formally prove that our parameterization guarantees non-degenerate gradients. 

 Recall from~\cref{sec:method_param} in the manuscripts that for each panel $p$ we
parameterize the orthogonal operator as
\[
    U_p = \exp(A_p), \qquad A_p = L_p R_p^\top - R_p L_p^\top,
\]
where $L_p, R_p \in \mathbb{R}^{d_h \times r}$ are learnable parameters and
$\exp(\cdot)$ denotes the matrix exponential. We initialize
\[
    L_p = \mathbf{0}, \qquad R_p \sim \mathcal{N}(0, \sigma^2),
\]
so that $A_p = \mathbf{0}$ and $U_p = I$ at step~0. Thus the OPRO operator has
no effect on the pre-trained model at initialization, while still admitting a
non-degenerate gradient, as we show below.

\paragraph{Notation} Let $\mathcal{L}$ be a scalar loss and define the Frobenius inner product
$\langle X, Y \rangle = \mathrm{tr}(X^\top Y)$.
Write
\[
    G := \nabla_{U_p} \mathcal{L}
    \quad\text{and}\quad
    \tilde G := \mathrm{D}\exp_{A_p}^{*}[G],
\]
where $\mathrm{D}\exp_{A_p}$ is the differential of the matrix exponential
at $A_p$ and $\mathrm{D}\exp_{A_p}^{*}$ is its adjoint with respect to
the Frobenius inner product \citep{AbsMahSep2008}.

\begin{proposition}[Zero initialization identity mapping with non-degenerate gradient]
    Let $U_p = \exp(A_p)$ with
    \[
        A_p = L_p R_p^\top - R_p L_p^\top.
    \]
    Then the gradients of $\mathcal{L}$ with respect to $L_p$ and $R_p$ are
    \[
        \nabla_{L_p} \mathcal{L} = (\tilde G - \tilde G^\top)\, R_p,
        \qquad
        \nabla_{R_p} \mathcal{L} = (\tilde G^\top - \tilde G)\, L_p.
    \]
    In particular, at zero initialization ($A_p = \mathbf{0}$ and $L_p = \mathbf{0}$),
    we have $U_p = I$ and $\tilde G = G$, so
    \[
        \nabla_{L_p} \mathcal{L} = (G - G^\top)\, R_p,
        \qquad
        \nabla_{R_p} \mathcal{L} = \mathbf{0}.
    \]
    Thus, the operator is initially the identity, but optimization starts
    immediately through $L_p$, while $R_p$ remains fixed at the first step.
\end{proposition}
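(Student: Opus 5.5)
The plan is a chain-rule computation in two stages: from $U_p$ back to the generator $A_p$, then from $A_p$ to the factors $L_p$ and $R_p$. I will use the Frobenius pairing throughout and write the first-order variation of the loss as $\delta\mathcal{L} = \langle G, \delta U_p\rangle$.

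\textbf{Stage 1: from $U_p$ to $A_p$.} By definition of the differential of the matrix exponential, $\delta U_p = \mathrm{D}\exp_{A_p}[\delta A_p]$. Taking the adjoint moves the operator onto $G$:
\[
\delta\mathcal{L} = \langle \mathrm{D}\exp_{A_p}^{*}[G], \delta A_p\rangle = \langle \tilde G, \delta A_p\rangle .
\]
So $\tilde G$ is the gradient of $\mathcal{L}$ with respect to $A_p$, treated as an unconstrained matrix.

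\textbf{Stage 2: from $A_p$ to $L_p$ and $R_p$.} Differentiating the generator gives
\[
\delta A_p = \delta L\, R^\top + L\,\delta R^\top - \delta R\, L^\top - R\,\delta L^\top .
\]
I pair each term with $\tilde G$ and use trace identities to isolate $\delta L$ and $\delta R$:
\[
\langle \tilde G, \delta L R^\top\rangle = \langle \tilde G R, \delta L\rangle,
\qquad
\langle \tilde G, R\,\delta L^\top\rangle = \langle \tilde G^\top R, \delta L\rangle .
\]
\begin{itemize}
\item The $\delta L$ terms therefore contribute $\langle (\tilde G - \tilde G^\top) R, \delta L\rangle$.
\item By the same identities, the $\delta R$ terms contribute $\langle (\tilde G^\top - \tilde G) L, \delta R\rangle$.
\end{itemize}
Reading off the coefficients of $\delta L$ and $\delta R$ yields the two stated gradient formulas.

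\textbf{Specialization to zero initialization.} With $L_p = 0$ we get $A_p = 0$, so $U_p = \exp(0) = I$. The differential of $\exp$ at $0$ is the identity map: from the series $\exp(A) = I + A + O(A^2)$, we have $\mathrm{D}\exp_0[H] = H$. Its adjoint is therefore also the identity, which gives $\tilde G = G$. Substituting $L_p = 0$ and $\tilde G = G$ into the two formulas gives
\[
\nabla_{L_p}\mathcal{L} = (G - G^\top) R_p,
\qquad
\nabla_{R_p}\mathcal{L} = 0 .
\]

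\textbf{Main obstacle.} The calculation itself is routine; the only delicate point is interpretive. The statement claims optimization "starts immediately," which needs $(G - G^\top) R_p \neq 0$. The formula alone does not guarantee this: the skew part of $G$ could vanish, or $R_p$ could lie in its null space. I would handle this by noting two facts:
\begin{itemize}
\item $R_p$ is Gaussian, so for any fixed nonzero skew matrix $G - G^\top$ the product $(G - G^\top)R_p$ is almost surely nonzero.
\item It therefore suffices that the skew part of $G$ is nonzero, which holds generically.
\end{itemize}
I would state this as the precise meaning of "non-degenerate."
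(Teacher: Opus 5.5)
Your proposal is correct and matches the paper's proof essentially step for step: the chain rule through the adjoint $\mathrm{D}\exp_{A_p}^{*}$, differentiating the generator, using the trace identity $\langle X, YZ^\top\rangle = \langle XZ, Y\rangle$ to isolate $dL_p$ and $dR_p$, and using $\mathrm{D}\exp_{\mathbf{0}} = \mathrm{id}$ at initialization. Your closing remark on when $(G-G^\top)R_p \neq \mathbf{0}$ goes beyond the paper, which leaves the point implicit, and the almost-sure argument is sound because at $L_p = \mathbf{0}$ the forward pass, and hence $G$, does not depend on $R_p$.
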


\medskip
\noindent\textit{Proof.}
By the chain rule and the expression for the differential of the
matrix exponential \citep{AbsMahSep2008}, for any perturbation
$E$ we have
\[
    d\mathcal{L} = \langle G, \mathrm{D}\exp_{A_p}[dA_p] \rangle = \langle \tilde G, dA_p \rangle.
\]
Differentiating $A_p = L_p R_p^\top - R_p L_p^\top$ gives
\[
    dA_p = dL_p\, R_p^\top + L_p\, dR_p^\top - dR_p\, L_p^\top - R_p\, dL_p^\top.
\]
Substituting this into the inner product and applying the identity $\langle X, Y Z^\top \rangle = \langle X Z, Y \rangle$, we expand $\langle \tilde G, dA_p \rangle$:
\[
\begin{aligned}
    \langle \tilde G, dA_p \rangle
    &= \big\langle \tilde G,\, dL_p\, R_p^\top + L_p\, dR_p^\top - dR_p\, L_p^\top - R_p\, dL_p^\top \big\rangle \\
    &= \big\langle (\tilde G - \tilde G^\top)\, R_p,\, dL_p \big\rangle
     + \big\langle (\tilde G^\top - \tilde G)\, L_p,\, dR_p \big\rangle.
\end{aligned}
\]
By the definition of the gradient with respect to the Frobenius inner product, this implies
\[
    \nabla_{L_p} \mathcal{L} = (\tilde G - \tilde G^\top)\, R_p,
    \qquad
    \nabla_{R_p} \mathcal{L} = (\tilde G^\top - \tilde G)\, L_p.
\]
Under zero initialization $A_p = \mathbf{0}$, the Jacobian of the exponential map is the identity, therefore $\tilde G = G$. With $L_p = \mathbf{0}$, the gradients simplify to:
\[
    \nabla_{L_p} \mathcal{L} = (G - G^\top)\, R_p,
    \qquad
    \nabla_{R_p} \mathcal{L} = \mathbf{0},
\]\hfill$\square$

\begin{figure*}[t]
  \centering
  \includegraphics[width=\linewidth]{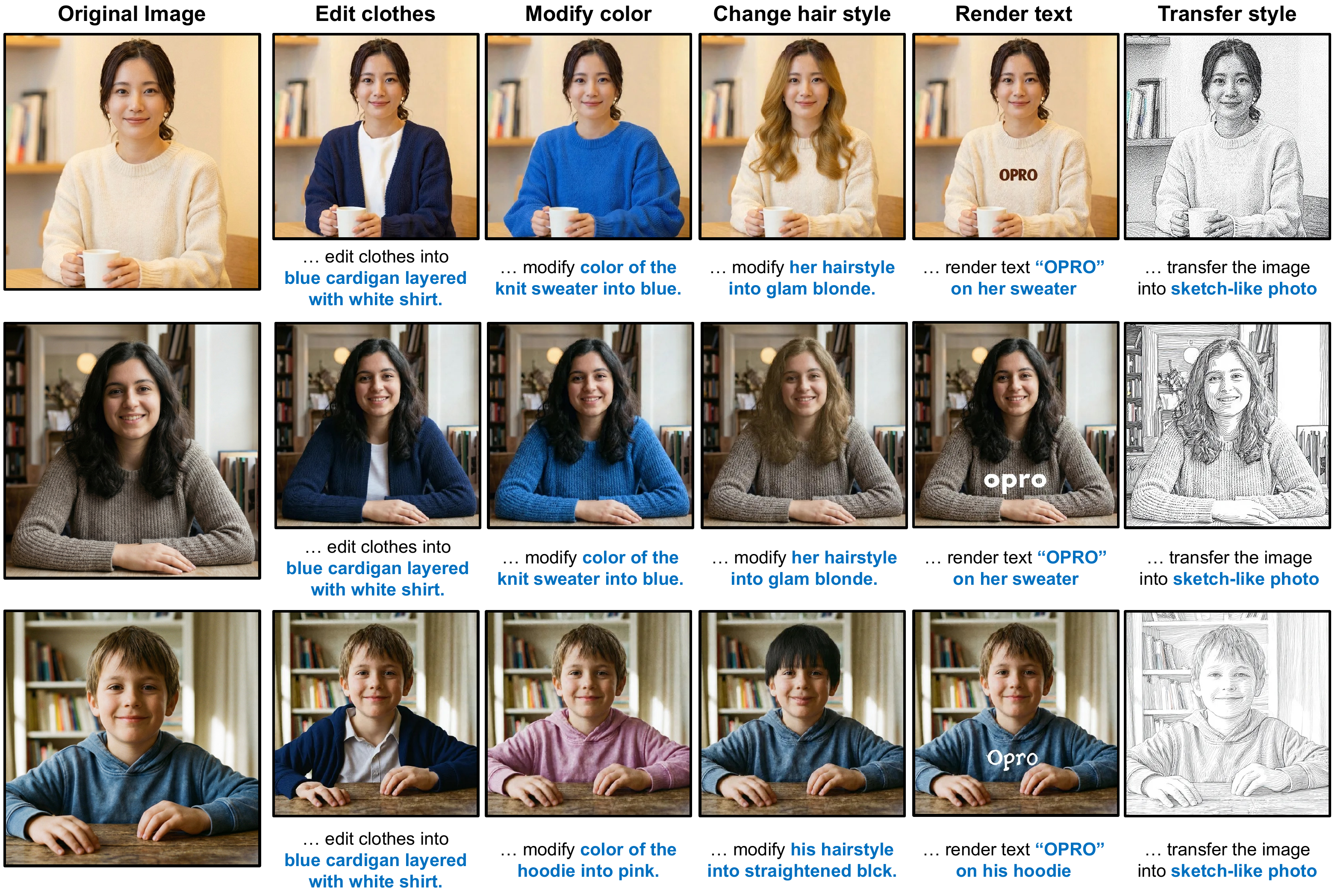}
    \caption{\textbf{Qualitative results on diverse instructional editing tasks.} We demonstrate the versatility of OPRO across a broad spectrum of editing categories. The examples illustrate the model's capability to precisely follow instructions for object replacement, attribute modification, text rendering, and global style transfer, all while maintaining high fidelity to the original image content.}
    \label{fig:sup_qual_main}
\end{figure*}

\begin{figure*}[t]
  \centering
  \includegraphics[width=\linewidth]{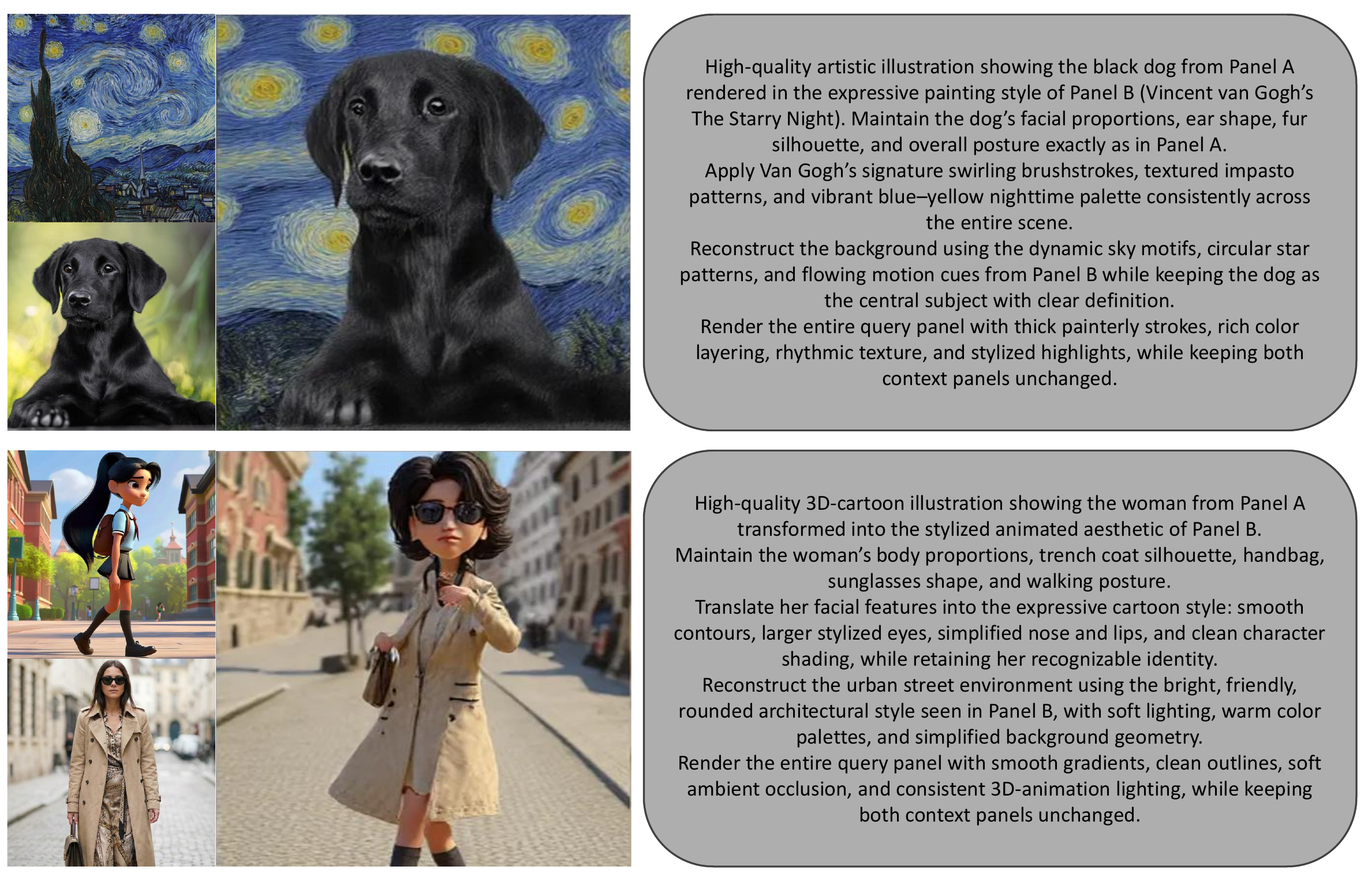}
    \caption{\textbf{Qualitative examples of multi-reference compositional generation.} We demonstrate the capability to integrate attributes from multiple context panels. The model synthesizes a new image by combining the \textit{style} from the first panel and the \textit{object} from the second panel. Note that this compositional ability emerges without explicit training on multi-reference layouts.}
    \label{fig:sup_qual_multi}
\end{figure*}

\section{Computational Cost Analysis}
\label{sec:sup-cost}

We analyze the computational overhead of OPRO when integrated with FluxFill~\citep{fluxfill}. OPRO introduces additional orthogonal transformations within the attention layers. Specifically, at each step and layer, OPRO performs two $128 \times 128$ matrix-vector rotations, corresponding to queries and keys, across all tokens. The additional floating-point operations ($\Delta \text{FLOPs}$) can be approximated as
\begin{equation}
    \Delta \text{FLOPs} \approx N_{\text{panel}} \cdot N_{\text{head}} \cdot d_h^2 \cdot N_{\text{tokens}} \cdot N_{\text{layers}} \cdot N_{\text{steps}},
\end{equation}
where $N_{\text{tokens}}$ denotes the number of tokens per panel. Substituting the configuration parameters detailed in~\cref{subsection:exp-main-editing} of the main manuscript ($N_{\text{panel}}=2$, $N_{\text{head}}=24$, $d_h=128$, $N_{\text{tokens}}=4{,}096$, $N_{\text{layers}}=57$, $N_{\text{steps}}=28$), the total additional computation amounts to approximately 10.3~TFLOPs. Furthermore, the cost of computing the matrix exponential is negligible (approximately 6.7~GFLOPs). Given the substantial computational budget of diffusion transformers, this theoretical overhead remains marginal.

\section{Qualitative Results and Inference-Time Scalability}
\label{sec:sup-qualitative}

We present additional qualitative results generated by ICEdit~\citep{zhang2025context} equipped with OPRO. Figure~\ref{fig:sup_qual_main} demonstrates the versatility of OPRO across multiple instructional editing tasks. The provided examples illustrate the capability of the model to execute precise modifications, including object replacement, attribute alteration, text rendering, and global style transfer, while preserving the content of the original image.

Furthermore,~\cref{fig:sup_qual_multi} details the inference-time scalability of OPRO by demonstrating compositional generation with multi-reference inputs. Specifically, we apply a model trained on a fixed two-panel layout to a three-panel configuration comprising two reference images. We enable this multi-reference inference by reusing the OPRO learned for the single-reference panel across both references, while applying the target operator to the generation panel. By assigning images to panels that share identical functional roles during inference, we achieve multi-reference compositional generation without requiring retraining.

\section{Ablation Studies on Instructional Image Editing}
\label{sec:sup-real-ablation}
\begin{table}[t]
\centering
\caption{
    \textbf{Ablation Studies on MagicBrush.}
    The table validates the design principles of OPRO on the ICEdit baseline ($r{=}16$). Relative to OPRO, breaking isometry (APB) or same-panel invariance (Asym-OPRO) reduces performance. Removing zero initialization preserves spatial alignment but lowers semantic consistency, as reflected by CLIP-I and DINO.
    }
    \label{tab:realworld_ablation}
\resizebox{\columnwidth}{!}{
\begin{tabular}{l c c c c c}
\toprule
Method & Isometry & SP-Inv & L1 $\downarrow$ & CLIP-I $\uparrow$ & DINO $\uparrow$ \\
\midrule
LoRA (Baseline) & - & - & 0.1189 & 0.8703 & 0.7706 \\
\midrule
+ APB & No & No & 0.0966 & 0.8893 & 0.8196 \\
+ Asym-OPRO & Yes & No & 0.0988 & 0.8880 & 0.8151 \\
+ OPRO (w/o Zero Init) & Yes & Yes & \textbf{0.0780} & 0.8989 & 0.8510 \\
\midrule
\textbf{+ OPRO (Ours)} & \textbf{Yes} & \textbf{Yes} & 0.0781 & \textbf{0.9002} & \textbf{0.8531} \\
\bottomrule
\end{tabular}%
}
\end{table}
Table~\ref{tab:realworld_ablation} extends the ablation studies of the main manuscript to the MagicBrush dataset. Consistent with the results of the compositional reasoning task, violating isometry (APB) or same-panel invariance (Asym-OPRO) degrades performance across all metrics. Furthermore, omitting zero initialization (+ OPRO w/o Zero Init) achieves a spatial alignment error (L1) comparable to that of the proposed OPRO, yet yields lower semantic consistency scores (CLIP-I and DINO) than the proposed method. This semantic degradation aligns with the accuracy drop observed in the main manuscript, demonstrating that an identity mapping initialization is essential to preserve the visual priors of the pre-trained model.

\section{Complete Hyperparameter Settings}
\label{sec:sup-hyper}
We provide detailed hyperparameter configurations used in our experiments. \cref{tab:sup-hyperparams_editing} summarizes the training settings for the instructional image editing baselines. \cref{tab:sup-hyperparams_two_stage} presents the optimization details for the two-stage compositional reasoning task.
\begin{table*}[t]
    \centering
    \caption{\textbf{Hyperparameters for instructional image editing baselines.} All models are trained for 5,000 steps using the AdamW optimizer (weight decay 0.01, learning rate $1 \times 10^{-4}$) with a batch size of 8. We use bfloat16 precision and a constant learning rate schedule. Note that InsertAnything uses FluxPriorRedux for reference-image conditioning, while UNO adopts an in-context approach. We adapt OPRO in each self-attention layer.}
    \label{tab:sup-hyperparams_editing}
    \resizebox{\linewidth}{!}{%
    \begin{tabular}{l|l l l c c}
    \toprule
    \textbf{Method} & \textbf{Base Model} & \textbf{Pos. Encoding} & \textbf{LoRA Target Modules} & \textbf{LoRA Rank ($r$)} & \textbf{OPRO Rank ($\rho$)} \\
    \midrule
    ICEdit~\citep{zhang2025context} & FluxFill Dev & Global-canvas & Attention (\texttt{q,k,v,out}) & 16 & 32 \\
    ACE++~\citep{mao2025ace++} & FluxFill Dev & Global-canvas & Attn + MLP + Modulation & 16 & 32 \\
    InsertAnything~\citep{song2025insert} & FluxFill Dev (+Redux) & Global-canvas & Attention Projections (\texttt{q,k,v,out}) & 16 & 32 \\
    UNO~\citep{uno} & Flux Dev & Per-panel & Attention Projections  + MLP & 256 & 32 \\
    \bottomrule
    \end{tabular}
    }
\end{table*}

\begin{table*}[t]
\centering
\small 
\caption{\textbf{Detailed hyperparameters for two-stage compositional reasoning.}}
\label{tab:sup-hyperparams_two_stage}
\begin{tabular}{l|cc}
\toprule
\textbf{Hyperparameter} & \textbf{Stage 1 (Pre-training)} & \textbf{Stage 2 (Fine-tuning)} \\
\midrule
Optimization & Adam ($\beta_1=0.9, \beta_2=0.999$) & Adam ($\beta_1=0.9, \beta_2=0.999$) \\
Batch Size & 256 & 256 \\
Learning Rate & $1 \times 10^{-3}$ (Warmup+Cosine) & $5 \times 10^{-4}$ (Constant) \\
Weight Decay & $0.05$ & $0.05$ \\
Training Steps & 50k & 2k \\
\midrule
\textbf{Architecture / Adapter} & & \\
Patch Size & $16 \times 16$ & $16 \times 16$ \\
Positional Encoding & Learnable (APE/RoPE etc.) & Frozen \\
Adapter Rank & - & LoRA $r=8$ / OPRO $\rho=(2,4,8)$ \\
\bottomrule
\end{tabular}
\end{table*}